\documentclass{llncs} 

\usepackage{amssymb}
\usepackage{graphicx}
\usepackage{amsmath}
\usepackage{mathtools}
\usepackage{tikz}

\newcommand{\featdim}{d}
\newcommand{\Nunl}{U}
\newcommand{\Nlab}{L}
\newcommand{\X}{\mathbf{X}  }
\newcommand{\Xe}{\mathbf{X}_e  }
\newcommand{\XeT}{\mathbf{X}_e^{\top}}

\newcommand{\ye}{\begin{bmatrix} \mathbf{y}  \\ \mathbf{y}_u \end{bmatrix}}

\newcommand{\Cb}{\mathcal{C}_{\beta}}

\begin{document}

\title{Implicitly Constrained Semi-Supervised Least Squares Classification}

\author{Jesse H. Krijthe\inst{1,2} \and Marco Loog\inst{1,3}}

\institute{Pattern Recognition Laboratory, Delft University of Technology
\and
Department of Molecular Epidemiology, Leiden University Medical Center
\and
The Image Group, University of Copenhagen
}
\maketitle
\begin{abstract}
We introduce a novel semi-supervised version of the least squares classifier.
This implicitly constrained least squares (ICLS) classifier minimizes the squared loss on the labeled data among the set of parameters implied by all possible labelings of the unlabeled data.
Unlike other discriminative semi-supervised methods, our approach does not introduce explicit additional assumptions into the objective function, but leverages implicit assumptions already present in the choice of the supervised least squares classifier.
We show this approach can be formulated as a quadratic programming problem and its solution can be found using a simple gradient descent procedure. 
We prove that, in a certain way, our method never leads to performance worse than the supervised classifier.
Experimental results corroborate this theoretical result in the multidimensional case on benchmark datasets, also in terms of the error rate.
\end{abstract}

\section{Introduction}
Semi-supervised classification concerns the problem of using additional unlabeled data, aside from only labeled objects considered in supervised learning, to learn a classification function.
The challenge of semi-supervised learning is to incorporate this additional information to improve the classification function over the supervised function.

The goal of this work is to build a semi-supervised version of the least squares classifier that has the property that, at least in expectation, its performance is not worse than supervised least squares classification.
While it may seem like an obvious requirement for any semi-supervised method, current approaches to semi-supervised learning do not have this property. 
In fact, performance can significantly degrade as more unlabeled data is added, as has been shown in \cite{Cozman2006,Cozman2003}, among others.
This makes it difficult to apply these methods in practice, especially when there is a small amount of labeled data to identify possible reduction in performance.
A useful property of any semi-supervised learning procedure would therefore be that its performance does not degrade as we add more unlabeled data.

We present a novel approach to semi-supervised learning for the least squares classifier that we will refer to as implicitly constrained least squares classification (ICLS). 
ICLS leverages implicit assumptions present in the supervised least squares classifier to construct a semi-supervised version. This is done by minimizing the supervised loss function subject to the constraint that the solution has to correspond to the solution of the least squares classifier for some labeling of the unlabeled objects.
Through this formulation, we exploit constraints inherent in the choice of the supervised classifier whereas current state-of-the-art semi-supervised learning approaches typically rely on imposing additional extraneous, and possibly incorrect, assumptions \cite{Seeger2001,Singh2008}.

This work considers a semi-supervised version of the supervised least squares classifier, in which classes are encoded as numerical outputs after which a linear regression model is applied (see section \ref{section:leastsquares}). By placing a threshold on the output of this model, one can use it to predict class labels. 
In a different neural network formulation, this classifier is also known as Adaline \cite{Widrow1960}.
There are several reasons why the least squares classifier is a particularly interesting classifier to study: 
First of all, the least squares classifier is a discriminative classifier. 
Some have claimed semi-supervised learning without additional assumptions is impossible for discriminative classifiers \cite{Seeger2001,Singh2008}. 
Our results show this may not strictly hold. 
Secondly, as we will show in section \ref{section:icls}, the closed-form solution for the supervised least squares classifier allows us to study its theoretical properties.
Moreover, using the closed-form solution we can rewrite our semi-supervised approach as a quadratic programming problem, which can be solved through a simple gradient descent with boundary constraints. 
Lastly, least squares classification is a useful and adaptable classification technique  allowing for straightforward use of, for instance, regularization, sparsity penalties or kernelization \cite{Hastie2001,Rifkin2003,Poggio2003,Tibshirani1996}. 
Using these formulations, it has been shown to be competitive with state-of-the-art methods based on loss functions other than the squared loss \cite{Rifkin2003} as well as computationally efficient on large datasets \cite{Bottou2010}.


The main contributions of this paper are
\begin{itemize}
  \item A novel convex formulation for robust semi-supervised learning using squared loss (Equation \eqref{icls})
  \item A proof that this procedure never reduces performance in terms of the squared loss for the 1-dimensional case (Theorem \ref{theorem:1d})
  \item An empirical evaluation of the properties of this classifier (Section \ref{section:empiricalresults})
\end{itemize}

We start with a discussion of related work after which we introduce our semi-supervised version of the least squares classifier. In Sections \ref{section:theoreticalresults} and \ref{section:empiricalresults}, we study the non-degradation property of this method both theoretically and by considering the method's behaviour on benchmark datasets. In the final sections we discuss the results and conclude.

\section{Related Work} 
\label{section:relatedwork}
Many diverse semi-supervised learning techniques have been proposed \cite{Chapelle2006,Zhu2009}. Most of these techniques rely on introducing useful assumptions that link information about the distribution of the features $P_X$ to the posterior of the classes $P_{Y|X}$. Some have argued unlabeled data can \emph{only} help if $P_X$ and $P_{Y|X}$ are somehow linked  through one of these assumptions \cite{Singh2008}.
While these methods have proven successful in particular applications, such as document classification \cite{Nigam2000}, it has also been observed that these techniques may give performance worse than their supervised counterparts, see \cite{Cozman2006,Cozman2003}, among others. 
In these cases, disregarding the unlabeled data would lead to better performance. 

The method considered in our work is different from most previous work in semi-supervised learning in that it is inherently robust against this decrease in performance. We show that one does not need extrinsic assumptions for semi-supervised learning to work. In fact, such assumptions may actually be at the root of the problem: clearly if such an additional assumption is correct, the semi-supervised classifier can gain from it, but if the assumption is incorrect, degraded performance may ensue.  What we will leverage in our approach are the implicit assumptions that are, in a sense, intrinsic to the supervised least squares classifier. This work is in line with the proposal of \cite{Loog2014a,Loog2014b} which set out to improve likelihood based classifiers in a similar way. Our approach, however, does not rely on explicitly formulating the intrinsic constraints on the estimated parameters. Moreover, our approach allows for theoretical analysis of the non-deterioration of the performance of the procedure.

Another attempt to construct a robust semi-supervised version of a supervised classifier has been made in \cite{Li2015}, which introduces the safe semi-supervised support vector machine (S4VM). 
This method is an extension of semi-supervised SVM \cite{Bennett1998} which constructs a set of low-density decision boundaries with the help of the additional unlabeled data, and chooses the decision boundary, which, even in the worst-case, gives the highest gain in performance over the supervised solution. 
If the low-density assumption holds, it can be proven this procedure increases classification accuracy over the supervised solution. 
The main difference with the method considered in this paper, however, is that we make no such additional assumptions. We show that even without such assumptions, robust improvements are possible for the least squares classifier.

\section{Method}
\label{section:method}

\subsection{Supervised Multivariate Least Squares Classification} \label{section:leastsquares}

Least squares classification \cite{Hastie2001,Rifkin2003} is the direct application of well-known ordinary least squares regression to a classification problem. A linear model is assumed and the parameters are minimized under squared loss. Let $\mathbf{X}$ be an $\Nlab \times (\featdim+1)$ design matrix with $\Nlab$ rows containing vectors of length equal to the number of features $\featdim$ plus a constant feature to encode the intercept. Vector $\textbf{y}$ denotes an $\Nlab \times 1$ vector of  class labels. We encode one class as $0$ and the other as $1$.  The multivariate version of the empirical risk function for least squares regression is given by
\begin{equation} \label{squaredloss}
\hat{R}(\boldsymbol{\beta}) = \frac{1}{n} \left\|  \mathbf{X} \boldsymbol{\beta}-\mathbf{y} \right\| _2^2
\end{equation}
The well known closed-form solution for this problem is found by setting the derivative with respect to $\boldsymbol{\beta}$ equal to $\textbf{0}$ and solving for $\boldsymbol{\beta}$, giving:
\begin{equation} \label{olssolution}
\boldsymbol{\hat{\beta}}=\left(\mathbf{X}^T \mathbf{X}\right)^{-1} \mathbf{X}^T \mathbf{y}
\end{equation}
In case $\textbf{X}^T \textbf{X}$ is not invertible (for instance when $n<(\featdim+1)$), a pseudo-inverse is applied. As we will see, the closed form solution to this problem will enable us to formulate our semi-supervised learning approach in terms of a standard quadratic programming problem, which is easy to optimize.

\subsection{Implicitly Constrained Least Squares Classification} \label{section:icls}

In the semi-supervised setting, apart from a design matrix $\textbf{X}$ and target vector $\textbf{y}$, an additional set of measurements $\textbf{X}_u$ of size $\Nunl \times (\featdim+1)$ \emph{without} a corresponding target vector $\textbf{y}_u$ is given. In what follows, $\mathbf{X}_e=\begin{bmatrix} \mathbf{X}^T  & \mathbf{X}_u^T \end{bmatrix}^T$ denotes the extended design matrix which is simply the concatenation of the design matrices of the labeled and unlabeled objects.

In the implicitly constrained approach, we propose that a sensible solution to incorporate the additional information from the unlabeled objects is to search within the set of classifiers that can be obtained by all possible labelings $\textbf{y}_u$, for the one classifier that minimizes the \emph{supervised} empirical risk function \eqref{squaredloss}. This set, $\mathcal{C}_{\boldsymbol{\beta}}$, is formed by the $\boldsymbol{\beta}$'s that would follow from training supervised classifiers on all (labeled and unlabeled) objects going through all possible soft labelings for the unlabeled samples, i.e., using all $\textbf{y}_u \in [0,1]^{\Nunl}$. Since these supervised solutions have a closed form, this can be written as:
\begin{equation} \label{constrainedregion}
\mathcal{C}_{\boldsymbol{\beta}} := \left\{   \boldsymbol{\beta} = \left( {\XeT} {\Xe} \right)^{-1} {\XeT} \ye: \mathbf{y}_u \in [0,1]^{\Nunl} \right\}
\end{equation}
This constrained region $\mathcal{C}_{\boldsymbol{\beta}}$, combined with the supervised loss that we want to optimize in equation \eqref{squaredloss}, gives the following definition for implicitly constrained semi-supervised least squares classification:
\begin{equation}
\begin{aligned}
&\operatorname*{argmin}_{\boldsymbol{\beta} \in \mathbb{R}^{\featdim+1}} & \frac{1}{n}  ||\mathbf{X} \boldsymbol{\beta}-\mathbf{y}||^2  \\
& \text{subject to} & \boldsymbol{\beta} \in \mathcal{C}_{\boldsymbol{\beta}}  \\
\end{aligned}
\end{equation}
Since $\boldsymbol{\beta}$ is fixed for a particular choice of $\textbf{y}_u$ and has a closed form solution, we can rewrite the minimization problem in terms of $\textbf{y}_u$ instead of $\boldsymbol{\beta}$:
\begin{equation} \label{icls}
\begin{aligned}
& \operatorname*{argmin}_{\mathbf{y}_u} & \frac{1}{n}  \left\|  \X \left(\XeT \Xe \right)^{-1} \XeT \ye - \mathbf{y} \right\|_2^2 \\ 
& \text{subject to} & \mathbf{y}_u \in [0,1]^{\Nunl} \\
\end{aligned}
\end{equation}
Solving for $\mathbf{y}_u$ gives a labeling that we can use to construct the semi-supervised classifier using equation \eqref{olssolution} by considering the imputed labels as the labels for the unlabeled data. The problem defined in equation \eqref{icls}, is a standard quadratic programming problem. Due to the simple box constraints on the unknown labels this can be solved efficiently using a quasi-Newton approach that takes into account the simple $[0,1]$ bounds, such as L-BFGS-B \cite{Byrd1995}.

\section{Theoretical Results}
\label{section:theoreticalresults}
We will examine this procedure by considering it in a simple, yet illustrative setting. In this case we will, in fact, prove this procedure will \emph{never} give worse performance than the supervised solution.
Consider the case where we have just one feature $x$, a limited set of labeled instances and assume we know the probability density function of this feature $f_X(x)$ exactly. 
This last assumption is similar to having unlimited unlabeled data. 
We consider a linear model with no intercept: $y = x \beta$ where $y$ is set as $0$ for one class and $1$ for the other. 
For new data points, estimates $\hat{y}$ can be used to determine the predicted label of an object by using a threshold set at, for instance, $0.5$.

The expected squared loss, or risk, for this model is given by:

\begin{equation} \label{eq:trueloss}
R^*(\beta) = \sum_{y \in \{0,1\}}{ \int_{-\infty}^{\infty}(x \beta - y)^2  f_{X,Y}(x,y)  \mathrm{d}x}
\end{equation}
Where $f_{X,Y}=P(y|x) f_X(x)$. We will refer to this as the joint density of X and Y. Note, however, that this is not strictly a density, since it deals with the joint distribution over a continuous $X$ and a discrete $Y$. The optimal solution $\beta^*$ is given by the $\beta$ that minimizes this risk:

\begin{equation} \label{eq:bayesoptimal}
\beta^* = \operatorname*{argmin}_{\beta \in \mathbb{R}} R^*(\beta)
\end{equation}
We will show the following result:
\begin{theorem}
\label{theorem:1d}
Given a linear model without intercept, $y = x\beta$, and $f_X(x)$ known, the estimate obtained through implicitly constrained least squares always has an equal or lower risk than the supervised solution: $$R^\ast (\hat{\beta}_{semi}) \le R^\ast (\hat{\beta}_{sup})$$
\end{theorem}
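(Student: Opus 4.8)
The plan is to exploit the fact that, in this one-dimensional no-intercept setting, the true risk $R^\ast$ is a convex quadratic in the scalar $\beta$, so that controlling the \emph{distance} of an estimate from the optimal $\beta^\ast$ already controls its risk. First I would expand the integral in \eqref{eq:trueloss}. Using $y\in\{0,1\}$ and $P(y=0\mid x)+P(y=1\mid x)=1$, the two summands combine and the risk collapses to the scalar quadratic
\begin{equation}
R^\ast(\beta) = a\,\beta^2 - 2b\,\beta + c,
\end{equation}
where $a=\int x^2 f_X(x)\,\mathrm{d}x$, $b=\int x\,P(y=1\mid x) f_X(x)\,\mathrm{d}x$ and $c=P(y=1)$. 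Since $a>0$, completing the square gives $R^\ast(\beta)=a(\beta-\beta^\ast)^2+(c-b^2/a)$ with $\beta^\ast=b/a$, so $R^\ast$ is a strictly increasing function of $|\beta-\beta^\ast|$. Hence it suffices to prove that the semi-supervised estimate is never farther from $\beta^\ast$ than the supervised one, i.e. $|\hat{\beta}_{semi}-\beta^\ast|\le|\hat{\beta}_{sup}-\beta^\ast|$.

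Next I would analyse the geometry of the constraint set $\Cb$. In one dimension $\Cb$ is the image of the connected set $[0,1]^{\Nunl}$ under the continuous map $\mathbf{y}_u\mapsto\beta$, hence a closed bounded interval $[\beta_{\min},\beta_{\max}]$. The crucial step is to show $\beta^\ast\in\Cb$. Here the hypothesis that $f_X$ is known — equivalently, that unlabeled data is unlimited — does the real work: in this regime the $\beta$ produced by a labeling is governed by the unlabeled term, $\beta=\big(\int x\,g(x) f_X(x)\,\mathrm{d}x\big)\big/\big(\int x^2 f_X(x)\,\mathrm{d}x\big)$ for a labeling function $g(x)\in[0,1]$, and the choice $g(x)=P(y=1\mid x)$ reproduces exactly $b/a=\beta^\ast$. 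Thus $\beta^\ast$ is attained by the Bayes-posterior soft labeling and lies in $\Cb$.

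With these two facts the conclusion is a short projection argument. The estimate $\hat{\beta}_{semi}$ minimizes the supervised loss $\tfrac1n\|\X\beta-\mathbf{y}\|_2^2$, itself a convex quadratic in $\beta$ with unconstrained minimizer $\hat{\beta}_{sup}$, over the interval $\Cb$. Minimizing a convex quadratic over an interval returns the unconstrained minimizer when feasible, and otherwise the nearer endpoint; that is, $\hat{\beta}_{semi}$ is the Euclidean projection of $\hat{\beta}_{sup}$ onto $\Cb$. Because projection onto a convex set is non-expansive and $\beta^\ast$ is fixed by the projection (it already lies in $\Cb$), we obtain $|\hat{\beta}_{semi}-\beta^\ast|=|\mathrm{proj}_{\Cb}(\hat{\beta}_{sup})-\mathrm{proj}_{\Cb}(\beta^\ast)|\le|\hat{\beta}_{sup}-\beta^\ast|$, which combined with the first paragraph yields $R^\ast(\hat{\beta}_{semi})\le R^\ast(\hat{\beta}_{sup})$.

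The main obstacle I anticipate is establishing $\beta^\ast\in\Cb$ rigorously, and in particular making precise the passage to unlimited unlabeled data: with a finite unlabeled sample the labeled term contributes to both the numerator and denominator of $\beta$, so $\Cb$ need not contain $\beta^\ast$ exactly and the argument would hold only approximately. The remaining ingredients — the quadratic form of $R^\ast$, the connectedness of $\Cb$, and the non-expansiveness of projection — are routine once this point is settled.
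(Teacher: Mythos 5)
Your proposal is correct and takes essentially the same route as the paper's proof: the paper likewise works at the population level where $\Cb$ is an interval containing $\beta^\ast$ (attained by the true labeling $\mathbb{E}[y|x]=P(y=1|x)$), observes that the constrained minimizer of the supervised quadratic loss is either $\hat{\beta}_{sup}$ itself or the nearer interval endpoint, and concludes via convexity of $R^\ast$ about $\beta^\ast$ --- your explicit completion of the square and the non-expansive-projection phrasing are just a tidier formalization of that case analysis. The obstacle you flag in the last paragraph does not apply to the theorem as stated, since the hypothesis that $f_X$ is known places you exactly in the unlimited-unlabeled-data regime where the labeled sample contributes nothing to the moments defining $\Cb$, which is precisely how the paper sets up its constrained interval.
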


\begin{proof}

Setting the derivative of \eqref{eq:trueloss} with respect to $\beta$ to $0$ and rearranging we get:

\begin{eqnarray}
&\beta & = \left( \int_{-\infty}^{\infty} { x^2 f_X(x) \mathrm{d}x} \right)^{-1} \sum_{y \in \{0,1\}} \int_{-\infty}^{\infty} { x y f_{X,Y}(x,y) \mathrm{d}x } \\
& & =    \left( \int_{-\infty}^{\infty} { x^2 f_X(x) \mathrm{d}x} \right)^{-1}  \int_{-\infty}^{\infty} { x f_X(x) \sum_{y \in \{0,1\}} y P(y|x) \mathrm{d}x} \\
& & =   \left( \int_{-\infty}^{\infty} { x^2 f_X(x) \mathrm{d}x} \right)^{-1}  \int_{-\infty}^{\infty} { x f_X(x) \mathbb{E}[y|x] \mathrm{d}x} \label{eqn:sslsolution}
\end{eqnarray}

In this last equation, since we assume $f_X(x)$ as given, the only unknown is the function $\mathbb{E}[y|x]$, the expectation of the label $y$, given $x$. Now suppose we consider every possible labeling of the unlimited number of unlabeled objects including fractional labels, that is, every possible function where $\mathbb{E}[y|x] \in [0,1]$. Given this restriction on $\mathbb{E}[y|x]$, the second integral in \eqref{eqn:sslsolution} becomes a re-weighted version of the expectation operation $\mathbb{E}[x]$. By changing the choice of $\mathbb{E}[y|x]$ one can vary the value of this integral, but it will always be bounded on an interval on $\mathbb{R}$. It follows that all possible $\beta$s also form an interval on $\mathbb{R}$, which we will refer to as the constrained set $\mathcal{C}_{\boldsymbol{\beta}}$. The optimal solution has to be in this interval, since it corresponds to a particular but unknown labeling $\mathbb{E}[y|x]$. Note from \eqref{eqn:sslsolution} that the boundaries of this interval are typically finite, unless the second moment of $X$ is equal to $0$.

\begin{figure}[t] 
  \centering
      \includegraphics[width=0.6\textwidth]{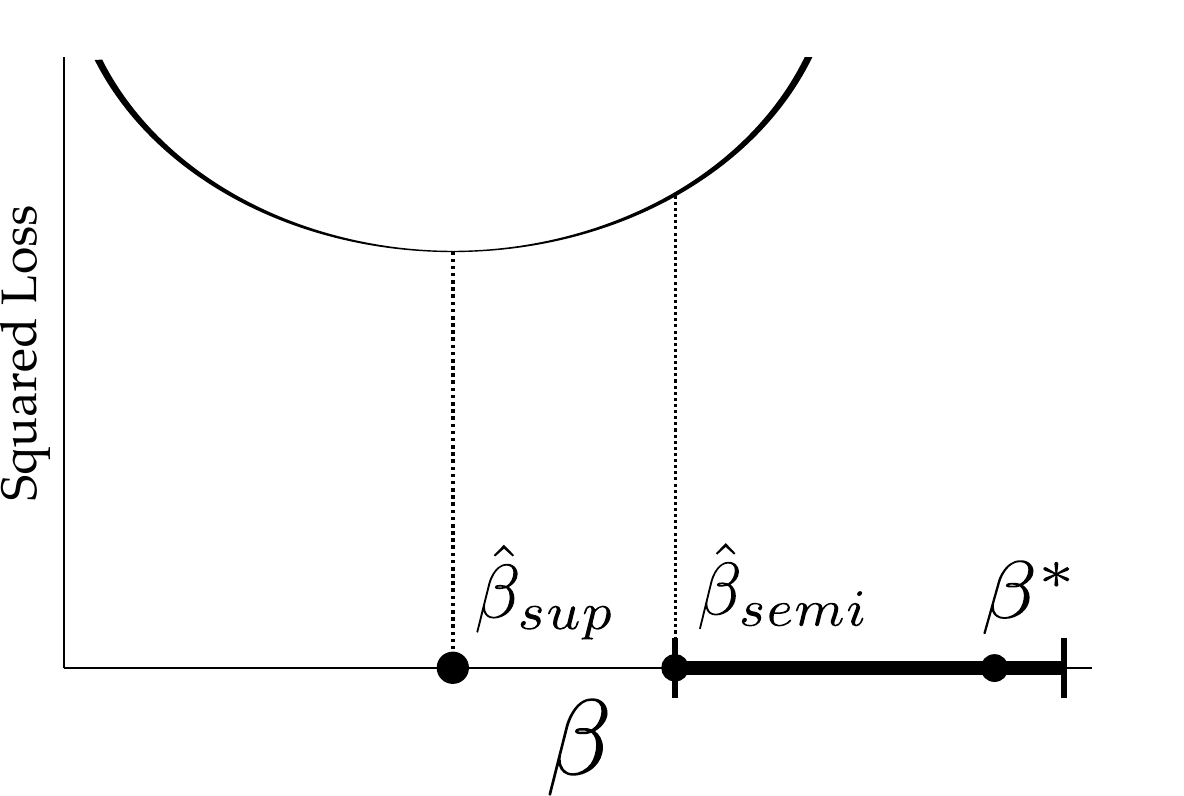}
  \caption{An example where implicitly constrained optimization improves performance. The supervised solution $\hat{\beta}_{sup}$ which minimizes the supervised loss (the solid curve), is not part of the interval of allowed solutions. The solution that minimizes this supervised loss within the allowed interval is $\hat{\beta}_{semi}$. This solution is closer to the optimal solution ${\beta}^{\ast}$ than the supervised solution $\hat{\beta}_{sup}$.} \label{fig:constrainedproblem}
\end{figure}

Using the set of labeled data, we can construct a supervised solution $\hat{\beta}_{sup}$ that minimizes the loss on the training set of $\Nlab$ labeled objects, see Figure \ref{fig:constrainedproblem}:

\begin{equation} \label{supervisedsolution}
\hat{\beta}_{sup} = \operatorname*{argmin}_{\beta \in \mathbb{R}} \sum_{i=1}^{\Nlab} (x_i \beta - y_i)^2
\end{equation}

Now, either this solution falls within the constrained region, $\hat{\beta}_{sup} \in \mathcal{C}_{\boldsymbol{\beta}}$ or not, $\hat{\beta}_{sup} \notin \mathcal{C}_{\boldsymbol{\beta}}$, with different consequences:

\begin{enumerate}
  \item If $\hat{\beta}_{sup} \in \mathcal{C}_{\boldsymbol{\beta}}$ there is a labeling of the unlabeled points that gives us the same value for $\beta$. Therefore, the solution falls within the allowed region and there is no reason to update our estimate. Therefore $\hat{\beta}_{semi}=\hat{\beta}_{sup}$.
  \item Alternatively, if $\hat{\beta}_{sup} \notin  \mathcal{C}_{\boldsymbol{\beta}}$, the solution is outside of the constrained region (as shown in Figure \ref{fig:constrainedproblem}): there is no possible labeling of the unlabeled data that will give the same solution as $\hat{\beta}_{sup}$. We then update the $\beta$ to be the $\beta$ within the constrained region that minimizes the loss on the supervised training set. As can be seen from Figure \ref{fig:constrainedproblem}, this will be a point on the boundary of the interval. Note that $\hat{\beta}_{semi}$ is now closer to $\beta^{*}$ than $\hat{\beta}_{sup}$. Since the true loss function $R^*(\beta)$ is convex  and achieves its minimum in the optimal solution, corresponding to the true labeling, the risk of our semi-supervised solution will always be equal to or lower than the loss of the supervised solution.
\end{enumerate}

Thus, the proposed update either improves the estimate of the parameter $\beta$ or it does not change the supervised estimate. In no case will the semi-supervised solution be worse than the supervised solution, in terms of the expected squared loss.

\end{proof}

\section{Empirical Results} 
\label{section:empiricalresults}


Since we extended the least squares classifier to the semi-supervised setting, we compare how, for different sizes of the unlabeled sample,  our semi-supervised least squares approach fares against supervised least squares classification without the constraints. For comparison we included an alternative semi-supervised approach by applying self-learning to the least squares classifier. In self-learning \cite{McLachlan1975}, the supervised classifier is updated iteratively by using its class predictions on the unlabeled objects as the labels for the unlabeled objects in the next iteration. This is done until convergence.


A description of the datasets used for our experiments is given in Table \ref{table:datasets}. We use datasets from both the UCI repository \cite{Bache2013} and from the benchmark datasets proposed by \cite{Chapelle2006}. While the benchmark datasets proposed in \cite{Chapelle2006} are useful, in our experience, the results on these datasets are very homogeneous because of the similarity in the dimensionality and their low Bayes errors. The UCI datasets are more diverse both in terms of the number of objects and features as well as the nature of the underlying problems. Taken together, this collection allows us to investigate the properties of our approach for a wide range of problems.

\begin{table}[t] 
\caption{Description of the datasets used in the experiments. Features indicates the dimensionality of the design matrix after categorical features are expanded into dummy variables.}
\begin{center}
\begin{tabular}{rrrr}
  \hline
 Name & \hspace{5 mm} \# Objects & \hspace{5 mm} \# Features  & \hspace{5 mm} Source \\ 
  \hline
  \texttt{Ionosphere} & 351 &  33 & \cite{Bache2013} \\ 
  \texttt{Parkinsons} & 195 &  22 & \cite{Bache2013} \\ 
  \texttt{Diabetes} & 768 &   8 & \cite{Bache2013} \\ 
  \texttt{Sonar} & 208 &  60 & \cite{Bache2013} \\ 
  \texttt{SPECT} & 267 &  22 & \cite{Bache2013} \\ 
  \texttt{SPECTF} & 267 &  44 & \cite{Bache2013} \\ 
  \texttt{WDBC} & 569 &  30 & \cite{Bache2013} \\
  \texttt{Digit1} & 1500 & 241 & \cite{Chapelle2006} \\ 
  \texttt{USPS} & 1500 & 241 & \cite{Chapelle2006}  \\ 
  \texttt{COIL2} & 1500 & 241 & \cite{Chapelle2006} \\ 
  \texttt{BCI} & 400 & 118 & \cite{Chapelle2006} \\ 
  \texttt{g241d} & 1500 & 241 & \cite{Chapelle2006} \\ 
   \hline
\end{tabular}
\end{center}

\label{table:datasets}
\end{table}

\subsection{Comparison of Learning Curves}
We study the behavior of the expected classification error of the ICLS procedure for different sizes for the unlabeled set. This statistic has two desired properties. First of all it should never be higher than the expected classification error of the supervised solution, which is based on only the labeled data. Secondly, the expected classification error should not increase as we add more unlabeled data. 

Experiments were conducted as follows. For each dataset, $\Nlab$ labeled points were randomly chosen, where we make sure it contains at least 1 object from each of the two classes.  With fewer than $\featdim$ samples, the \emph{supervised} least squares classifier is known to deteriorate in performance as more data is added, a behavior known as peaking \cite{Opper1996,Raudys1998}. Since this is not the topic of this work, we will only consider the situation in which the labeled design matrix is of full rank, which we ensure by setting $\Nlab=\featdim+5$, the dimensionality and intercept of the dataset plus five observations. For all datasets we ensure a minimum of $\Nlab=20$ labeled objects.

Next, we create unlabeled subsets of increasing size $\Nunl=[2,4,8,...,1024]$ by randomly selecting points from the original dataset without replacement. The classifiers are trained using these subsets and the classification performance is evaluated on the remaining objects. Since the test set decreases in size as the number of unlabeled objects increases, the standard error slightly increases with the number of unlabeled objects.

This procedure of sampling labeled and unlabeled points is repeated $100$ times. The results of these experiments are shown in Figure \ref{fig:learningcurves1}. We report the mean classification error as well as the standard error of this mean. As can be seen from the tight confidence bands, this offers an accurate estimate of the expected classification error.


\begin{figure}[t] 
  \centering
      \hspace*{-3.0cm}
      \input{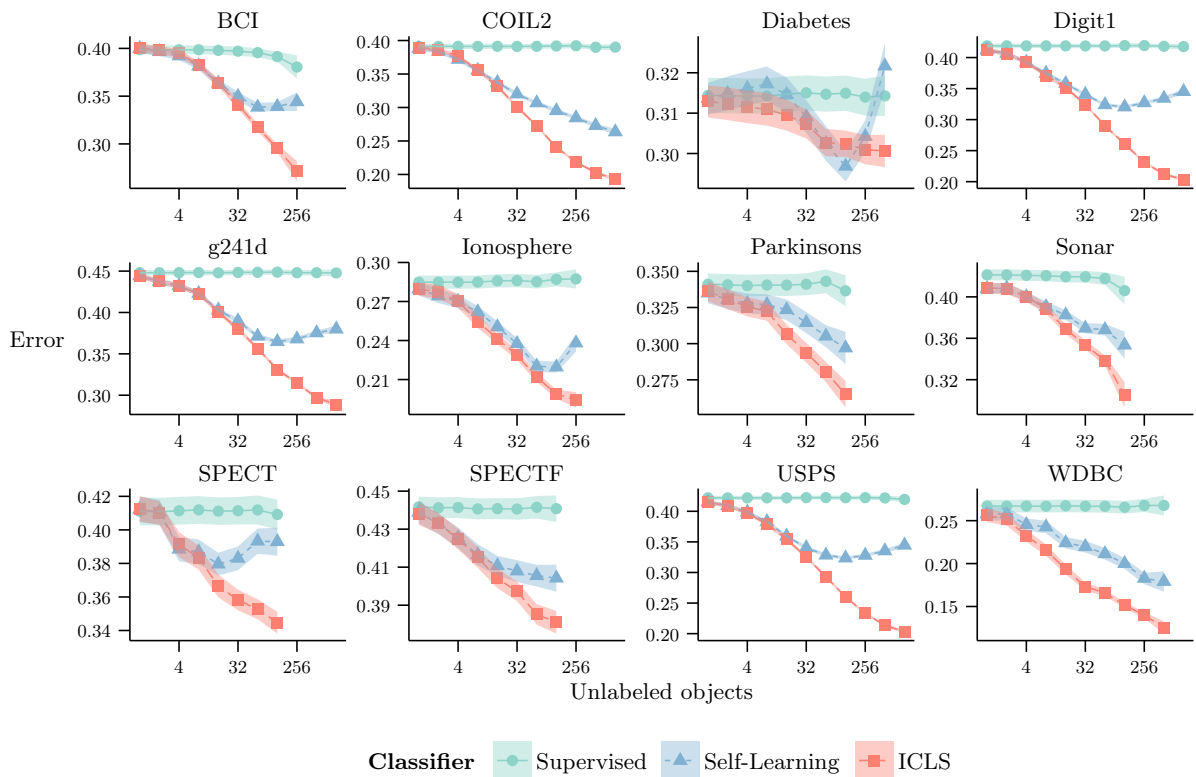}
  \caption{Mean classification error for $\Nlab=\max(\featdim+5,20)$ and $100$ repeats. The shaded areas indicate $+/-$ the standard error of the mean.} \label{fig:learningcurves1}
\end{figure}

We find that, generally, the ICLS procedure has monotonically decreasing error curves as the number of unlabeled samples increases, unlike self-learning. On the \texttt{Diabetes} dataset, the performance of self-learning becomes worse than the supervised solution when more unlabeled data is added, while the ICLS classifier again exhibits a monotonic decrease of the average error rate. 


\subsection{Benchmark performance}
We now consider the performance of these classifiers in a cross-validation setting. This experiment is set up as follows. For each dataset, the objects are randomly divided into $10$ folds. We iteratively go through the folds using $1$ fold as validation set, and the other $9$ as the training set. From this training set, we then randomly select $\Nlab=\featdim+5$ labeled objects, as in the previous experiment, and use the rest as unlabeled data. After predicting labels for the validation set for each fold, the classification error is then determined by comparing the predicted labels to the real labels. This is repeated $20$ times, while randomly assigning objects to folds in each iteration.


\begin{table}[t]
\caption{Average 10-fold cross-validation error and number of times the error of the semi-supervised classifier is higher than the supervised error for 20 repeats. Oracle refers to the performance of the least squares classifier trained when all labels are known. Indicated in $\mathbf{bold}$ is when a semi-supervised classifier has significantly lower error than the other, using a Wilcoxon signed rank test at $0.01$ significance level. A similar test is done to determine whether a semi-supervised classifier is significantly worse than the supervised classifier, indicated by \underline{underlined} values.} \label{table:cvresults}
\centering
\begin{tabular}{lllll}
  \hline
Dataset & Supervised \hspace{0.3cm} & Self-Learning \hspace{0.3cm} & ICLS \hspace{01.3cm} & Oracle \\ 
  \hline
\texttt{ Ionosphere } & 0.29 & 0.24 (1) & \textbf{0.19 (0)} & 0.13 \\ 
  \texttt{ Parkinsons } & 0.33 & 0.29 (3) & 0.27 (0) & 0.11 \\ 
  \texttt{ Diabetes } & 0.32 & \underline{0.33 (16)} & \textbf{0.31 (2)} & 0.23 \\ 
  \texttt{ Sonar } & 0.42 & 0.37 (1) & \textbf{0.32 (1)} & 0.25 \\ 
  \texttt{ SPECT } & 0.42 & 0.40 (7) & \textbf{0.33 (0)} & 0.17 \\ 
  \texttt{ SPECTF } & 0.44 & 0.41 (3) & \textbf{0.36 (0)} & 0.22 \\ 
  \texttt{ WDBC } & 0.27 & 0.17 (0) & \textbf{0.12 (0)} & 0.04 \\ 
  \texttt{ Digit1 } & 0.41 & 0.34 (0) & \textbf{0.20 (0)} & 0.06 \\ 
  \texttt{ USPS } & 0.42 & 0.35 (0) & \textbf{0.20 (0)} & 0.09 \\ 
  \texttt{ COIL2 } & 0.40 & 0.27 (0) & \textbf{0.19 (0)} & 0.10 \\ 
  \texttt{ BCI } & 0.40 & 0.35 (0) & \textbf{0.28 (0)} & 0.16 \\ 
  \texttt{ g241d } & 0.45 & 0.39 (0) & \textbf{0.29 (0)} & 0.13 \\ 
   \hline
\end{tabular}
\end{table}

The results shown in Table \ref{table:cvresults} tell a similar story to those in the previous experiment. Most importantly for the purposes of this paper, ICLS, in general, offers solutions that give at least no higher expected classification error than the supervised procedure. Moreover, in most of the cross-validation repeats, the error is not higher than the supervised error, although it does occur in some instances. 

\section{Discussion}
The results presented in this paper are encouraging in the light of negative theoretical performance results in the semi-supervised literature \cite{Cozman2006}. The result in Theorem 1 indicates the proposed procedure is in some way robust against reduction in performance. The empirical results in the previous section indicate a similar result in terms of the expected classification error, at least on this collection of datasets. These empirical observations are interesting because the loss that was evaluated in these experiments is misclassification error and not the squared loss that was considered in Theorem 1. Furthermore the experiments were carried in the multivariate setting with an intercept term using limited unlabeled data, rather than the unlimited unlabeled data setting considered in the theorem. This indicates that minimizing the supervised loss over the subset $\Cb$, leads to a semi-supervised learner with desirable behavior, both theoretically in terms of risk and empirically in terms of classification error.

It has been argued that, for discriminative classifiers, semi-supervised learning is impossible without additional assumptions about the link between labeled and unlabeled objects \cite{Seeger2001,Singh2008}. ICLS, however, is both a discriminative classifier and no explicit additional assumptions about this link are made. Any assumptions that are present follow, implicitly, from the choice of squared loss as the loss function and from the chosen hypothesis space. One could argue that constraining the solutions to $\Cb$ is an assumption as well. While this is true, it corresponds to a very weak assumption about the supervised classifier: that it will improve when we add additional labeled data. The lack of additional assumptions has another advantage: no additional parameters need to be correctly set for the results in sections 4 and 5 to hold. There is, for instance, no parameter to be chosen for the importance of the unlabeled data. Therefore, implicitly constrained semi-supervised learning is a very different approach to semi-supervised learning than current alternatives.

An open question is what other classifiers could benefit from the implicitly constrained approach considered here. Using negative log likelihood as a loss function, for instance, also leads to interesting semi-supervised classifiers, for instance in linear discriminant analysis \cite{Krijthe2014}. For other classifiers, the definition of the constraints used in this work might not lead to any useful constraints at all such that the supervised solution is always recovered. One would have to define additional constraints on the solutions in $\Cb$. The minimization of the supervised loss, considered in this paper, could still be relevant in these cases to construct a semi-supervised classifier that has similar robustness against deterioration in performance as ICLS.

\section{Conclusion}
This contribution introduced a new semi-supervised approach to least squares classification. By implicitly considering all possible labelings of the unlabeled objects and choosing the one that minimizes the loss on the labeled observations, we derived a robust classifier with a simple quadratic programming formulation. For this procedure, in the univariate setting with a linear model without intercept, we can prove it never degrades performance in terms of squared loss (Theorem 1). Experimental results indicate that in expectation this robustness also holds in terms of classification error on real datasets. Hence, semi-supervised learning for least squares classification without additional assumptions can lead to improvements over supervised least squares classification both in theory and in practice.

\subsection*{Acknowledgments}\label{sec:Acknowledgments}
Part of this work was funded by project P23 of the Dutch public-private research community COMMIT.

\bibliographystyle{splncs03}
\bibliography{library}

\end{document}